\newcommand{\PairDiff}{\textsf{PairDiff}\xspace}
\title{\textit{Why} \PairDiff \textit{works}? -- A Mathematical Analysis of \\Bilinear Relational Compositional Operators for Analogy Detection}
\author{$\text{Huda Hakami}^{\dagger}$ \\ \And
	     $\text{Danushka Bollegala}^{\dagger}$ \\
	     $\text{University of Liverpool}^{\dagger}$ \\
	     $\text{Japan Advanced Institute of Science and Technology}^{\ddagger}$ \\
	     {\tt \{H.A.Hakami,danushka\}@liverpool.ac.uk} \ \ \ {\tt hayashi.kohei@gmail.com}\And
	     $\text{Hayashi Kohei}^{\ddagger}$}
\date{}
\begin{document}
\maketitle
\begin{abstract}
Representing the semantic relations that exist between two given words (or entities) is an important first step in a wide-range of NLP applications such as analogical reasoning, knowledge base completion and relational information retrieval. A simple, yet surprisingly accurate method for representing a relation between two words is to compute the vector offset (\PairDiff) between their corresponding word embeddings. Despite the empirical success, it remains unclear as to whether \PairDiff is the best operator for obtaining a relational representation from word embeddings. 
We conduct a theoretical analysis of generalised bilinear operators that can be used to measure the $\ell_{2}$ relational distance between two word-pairs.
We show that, if the word embeddings are standardised and uncorrelated,  
such an operator will be independent of bilinear terms, and can be simplified to a linear form, where \PairDiff is a special case.
For numerous word embedding types, we empirically verify the uncorrelation assumption, demonstrating the general applicability of our theoretical result. Moreover, we experimentally discover \PairDiff from the bilinear relation composition operator on several benchmark analogy datasets.
\end{abstract}

\section{Introduction}
\label{sec:intro}


Different types of semantic relations exist between words such as \textsf{HYPERNYMY} between \emph{ostrich} and \emph{bird}, or \textsf{ANTONYMY} between \emph{hot} and \emph{cold}. If we consider entities\footnote{We interchangeably use the terms \emph{word} and \emph{entity} to represent both unigrams as well as a multi-word expressions including named entities.}, we can observe even a richer diversity of relations such as \textsf{FOUNDER-OF} between \emph{Bill Gates} and \emph{Microsoft}, or \textsf{CAPITAL-OF} between \emph{Tokyo} and \emph{Japan}.
Identifying the relations between words and entities is important for various Natural Language Processing (NLP) tasks such as automatic knowledge base completion \citep{socher2013reasoning}, analogical reasoning~\citep{turney2005corpus,bollegala2009relational} and relational information retrieval~\citep{duc2010using}. 
For example, to solve a word analogy problem of the form \enquote{\emph{a} is to \emph{b} as \emph{c} is to \emph{?}}, the relationship between the two words in the pair $(a,b)$ must be correctly identified in order to find candidates $d$ that have similar relations with $c$. 
For example, given the query \enquote{\emph{Bill Gates} is to \emph{Microsoft} as \emph{Steve Jobs} is to \emph{?}}, a relational search engine must retrieve \emph{Apple Inc.} because the \textsf{FOUNDER-OF} relation exists between the first and the second entity pairs.

Two main approaches for creating relation embeddings can be identified in the literature.
In the first approach, from given corpora or knowledge bases, word and relation embeddings are \emph{jointly} learnt such that some objective is optimised~\citep{KALE,Yang:ICLR:2015,Nickel:AAAI:2016,Bordes:NIPS:2013,Rocktaschel:ICLR:2016,AdvSets:2017,ComplexEmb}.
In this approach, word and relation embeddings are considered to be \emph{independent} parameters that must be learnt by the embedding method.
For example, TransE~\citep{Bordes:NIPS:2013} learns the word and relation embeddings such that we can accurately predict relations (links) in a given knowledge base using the learnt word and relation embeddings. Because relations are learnt independently from the words, we refer to methods that are based on this approach as \emph{independent} relational embedding methods.

A second approach for creating relational embeddings is to apply some operator on two word embeddings to \emph{compose} the embedding for the relation that exits between those two words, if any. In contrast to the first approach, we do not have to learn relational embeddings and hence this can be considered as an unsupervised setting,
where the compositional operator is predefined.
A popular operator for composing a relational embedding from two word embeddings is \PairDiff, which is the vector difference (offset) of the word embeddings~\cite{mikolov2013linguistic,levy2014linguistic,Vylomova:ACL:2016,Bollegala:AAAI:2015,blacoe-lapata:2012:EMNLP-CoNLL}.
Specifically, given two words $a$ and $b$ represented by their word embeddings respectively $\vec{a}$ and $\vec{b}$, the relation between $a$ and $b$ is given by $\vec{a} - \vec{b}$ under the \PairDiff operator. \newcite{mikolov2013linguistic} showed that \PairDiff can accurately solve analogy equations such as 
$\vv{\mathbf{king}}-\vv{\mathbf{man}} + \vv{\mathbf{woman}} = \vv{\mathbf{queen}}$, where we have used the top arrows to denote the embeddings of the corresponding words.
\newcite{Bollegala:IJCAI:2015} showed that \PairDiff can be used as a proxy for learning better word embeddings and \newcite{Vylomova:ACL:2016} conducted an extensive empirical comparison of \PairDiff using a dataset containing 16 different relation types.
Besides \PairDiff, concatenation~\citep{Huda:KBS:2017,Yin:ACL:2016}, circular correlation and convolution~\citep{Nickel:AAAI:2016} have been used in prior work for representing the relations between words. 
Because the relation embedding is composed using word embeddings instead of learning as a separate parameter, we refer to methods that are based on this approach as \emph{compositional} relational embedding methods.
Note that in this approach it is implicitly assumed that there exist only a single relation between two words.

In this paper, we focus on the operators that are used in compositional relational embedding methods. 
%
If we assume that the words and relations are represented by vectors embedded in some common space, then the operator  we are seeking must be able to produce a vector representing the relation between two words, given their word embeddings as the only input.
Although there have been different proposals for computing relational embeddings from word embeddings, it remains unclear as to what is the best operator for this task.
The space of operators that can be used to compose relational embeddings is open and vast. 
A space of particular interest from a computational point-of-view is the bilinear operators that can be parametrised using tensors and matrices. 
Specifically, we consider operators that consider pairwise interactions between two word embeddings (second-order terms) and contributions from individual word embeddings towards their relational embedding (first-order terms). 
The optimality of a relational compositional operator can be evaluated, for example, using the expected relational distance/similarity such as $\ell_{2}$ between analogous (positive) vs. nonanalogous (negative) word-pairs.

If we assume that word embeddings are standardised, uncorrelated and word-pairs are i.i.d, then we prove in \S\ref{sec:reps} that bilinear relational compositional operators are independent of bilinear pairwise interactions between the two input word embeddings. 
Moreover, under regularised settings (\S\ref{sec:reg}), the bilinear operator further simplifies to a linear combination of the input embeddings, and the expected loss over positive and negative instances becomes zero.
In \S\ref{sec:cross-corr}, we empirically validate the uncorrelation assumption for different pre-trained word embeddings such as 
the Continuous Bag-of-Words Model (CBOW)~\citep{Milkov:2013}, Skip-Gram with negative sampling (SG)~\citep{Milkov:2013}, Global Vectors (GloVe)~\citep{pennington2014glove}, word embeddings created using Latent Semantic Analysis (LSA)~\citep{LSA}, Sparse Coding (HSC)~\citep{faruqui-EtAl:2015:ACL-IJCNLP,Yogatama:ICML:2015}, and Latent Dirichlet Allocation (LDA)~\citep{Blei:JMLR:2003}. 
This empirical evidence implies that our theoretical analysis is applicable to relational representations composed from a wide-range of word embedding learning methods.
Moreover, our experimental results show that a bilinear operator reaches its optimal performance in two different word-analogy benchmark datasets, when it satisfies the requirements of the \PairDiff operator. 
We hope that our theoretical analysis will expand the understanding of relational embedding methods, and inspire future research on accurate relational embedding methods using word embeddings as the input.

\section{Related Work}
\label{sec:related}

As already mentioned in \S\ref{sec:intro}, methods for representing a relation between two words can be broadly categorised into two groups depending on whether the relational embeddings are learnt \emph{independently} of the word embeddings, or they are \emph{composed} from the word embeddings, in which case the relational embeddings fully depend on the input word embeddings. Next, we briefly overview the different methods that fall under each category. For a detailed survey of relation embedding methods see \cite{Nickel:2016}.

Given a knowledge base where an entity $h$ is linked to an entity $t$ by a relation $r$,  the TransE model~\citep{Bordes:NIPS:2013} scores the tuple $(h,t,r)$ by
the $\ell_{1}$ or $\ell_{2}$ norm of the vector $(\vec{h} + \vec{r} - \vec{t})$. 
\cite{Nickerl:ICML:2011} proposed RESCAL, which uses $\vec{h}\T\mat{M}_{r}\vec{t}$ as the scoring function, where $\mat{M}_{r}$ is a matrix embedding of the relation $r$. Similar to RESCAL, Neural Tensor Network~\citep{socher2013reasoning} also models a relation by a matrix. However, compared to vector embeddings of relations, matrix embeddings increase the number of parameters to be estimated, resulting in an increase in computational time/space and likely to overfit.
To overcome these limitations, DistMult~\citep{Yang:ICLR:2015} models relations by vectors and use elementwise multilinear dot product $\vec{r} \odot \vec{h} \odot \vec{t}$. Unfortunately, DistMult cannot capture directionality of a relation. Complex Embeddings~\citep{ComplexEmb} overcome this limitation of DistMult by using complex embeddings and defining the score to be the real part of $\vec{r} \odot \vec{h} \odot \bar{\vec{t}}$, where $\bar{\vec{t}}$ denotes the complex conjugate of $\vec{t}$. 

The observation made by \citet{mikolov2013linguistic} that the relation between two words can be represented by the difference between their word embeddings sparked a renewed interest in methods that compose relational embeddings using word embeddings. 
Word analogy datasets such as Google dataset~\citep{mikolov2013linguistic}, SemEval 2012 Task2 dataset~\citep{Jurgens:SEM:2012}, BATS~\citep{drozd-gladkova-matsuoka:2016:COLING} etc. have established as benchmarks for evaluating word embedding learning methods.

Different methods have been proposed to measure the similarity between the relations that exist between two given word pairs such as \textsf{CosMult}, \textsf{CosAdd} and \PairDiff~\citep{levy2014linguistic,Bollegala:IJCAI:2015}.
\citet{Vylomova:ACL:2016} studied as to what extent the vectors generated using simple \PairDiff encode different relation types. 
Under supervised classification settings, they conclude that \PairDiff can cover a wide range of semantic relation types. 
Holographic embeddings proposed by \citet{Nickel:AAAI:2016} use circular convolution to mix the embeddings of two words to create an embedding for the relation that exist between those words.
It can be showed that circular correlation is indeed an elementwise product in the Fourier space and is mathematically equivalent to complex embeddings~\citep{Hayashi:ACL:2017}.

Although \PairDiff operator has been widely used in prior work for computing relation embeddings from word embeddings, to the best of our knowledge, no theoretical analysis has been conducted so far explaining why and under what conditions \PairDiff is optimal, which is the focus of this paper.


\setcounter{secnumdepth}{2}

\section{Bilinear Relation Representations}
\label{sec:reps}

Let us consider the problem of representing the semantic relation $\vec{r}(\vec{h}, \vec{t})$ between two given words $h$ and $t$.
We assume that $h$ and $t$ are already represented in some $d$-dimensional space  respectively by their word embeddings $\vec{h}, \vec{t} \in \R^{d}$. The relation between two words can be represented using different linear algebraic structures. 
Two popular alternatives are vectors~\citep{Nickel:AAAI:2016,Bordes:NIPS:2013,AdvSets:2017,ComplexEmb} and matrices~\citep{socher2013reasoning,Bollegala:AAAI:2015}. 
Vector representations are preferred over matrix representations because of the smaller number of parameters to be learnt~\citep{Nickel:2016}.

Let us assume that the relation $r$ is represented by a vector $\vec{r} \in \R^{\delta}$ in some $\delta$-dimensional space.
Therefore, we can write $\vec{r}(\vec{h}, \vec{t})$ as a function that takes two vectors (corresponding to the embeddings of the two words) as the input and returns a single vector (representing the relation between the two words) as given in \eqref{eq:relrep}.
\begin{align}
 \label{eq:relrep}
\myfunc{\vec{r}}{\R^{d} \times \R^{d}}{\R^{\delta}}
\end{align}
Having both words and relations represented in the same $\delta = d$ dimensional space is useful for performing linear algebraic operations using those representations in that space. For example, in TransE~\citep{Bordes:NIPS:2013}, the strength of a relation $r$ that exists between two words $h$ and $t$ is computed as the $\ell_{1,2}$ norm of the vector $(\vec{h} + \vec{r} - \vec{t})$ using the word and relation embeddings. Such direct comparisons between word and relation embeddings would not be possible if words and relations were not embedded in the same vector space. 
If $\delta\!<\!d$, we can first project word embeddings to a lower $\delta$-dimensional space using some dimensionality reduction method such as SVD, whereas if $\delta\!>\!\!d$ we can learn higher $\delta$-dimensional overcomplete word representations~\cite{faruqui-EtAl:2015:ACL-IJCNLP} from the original $d$-dimensional word embeddings. Therefore, we will limit our theoretical analysis to the $\delta=d$ case for ease of description.

Different functions can be used as $\vec{r}(\vec{h}, \vec{t})$ that satisfy the domain and range requirements specified by \eqref{eq:relrep}.
If we limit ourselves to bilinear functions, the most general functional form is given by \eqref{eq:rel}.
\begin{align}
 \label{eq:rel}
 \vec{r}(\vec{h}, \vec{t}) = \vec{h}\T \underline{\mat{A}} \vec{t} + \mat{P}\vec{h} + \mat{Q}\vec{t}
\end{align}
Here, $\underline{\mat{A}} \in \R^{d \times d \times d}$ is a 3-way tensor in which each slice is a $d \times d$ real matrix.
Let us denote the $k$-th slice of $\underline{\mat{A}}$ by $\mat{A}^{(k)}$ and its $(i,j)$ element by $A_{ij}^{(k)}$.
The first term in \eqref{eq:rel} corresponds to the pairwise interactions between $\vec{h}$ and $\vec{t}$. 
$\mat{P}, \mat{Q} \in \R^{d \times d}$ are the nonsingular\footnote{If the projection matrix is nonsingular, then the inverse projection exists, which preserves the dimensionality of the embedding space.} projection matrices involving first-order contributions respectively of $\vec{h}$ and $\vec{t}$ towards $\vec{r}$.

Let us consider the problem of learning the simplest bilinear functional form according to \eqref{eq:rel} from a given dataset of 
analogous word-pairs $\cD_{+} = \{((h,t), (h',t'))\}$. Specifically, we would like to learn the parameters $\underline{\mat{A}}$, $\mat{P}$ and $\mat{Q}$ such that some distance (loss) between analogous word-pairs is minimised.
As a concrete example of a distance function, let us consider the popularly used Euclidean distance\footnote{For $\ell_{2}$ normalised vectors, their Euclidean distance is a monotonously decreasing function of their cosine similarity.} ($\ell_{2}$ loss) for two word pairs given by \eqref{eq:loss}.
\begin{align}
 \label{eq:loss}
 J((h,t), (h',t')) = \norm{\vec{r}(\vec{h}, \vec{t}) - \vec{r}(\vec{h'}, \vec{t'})}_{2}^{2}
\end{align}

If we were provided only analogous word-pairs (i.e. positive examples), then this task could be trivially achieved by setting all parameters to zero.
However, such a trivial solution would not generalise to unseen test data. Therefore, in addition to $\cD_{+}$ we would require a set of non-analogous word-pairs $\cD_{-}$ as negative examples. Such negative examples are often generated in prior work by randomly corrupting positive relational tuples~\citep{Nickel:AAAI:2016,Bordes:NIPS:2013,ComplexEmb} or by training an adversarial generator~\citep{AdvSets:2017}. 

The total loss $J$ over both positive and negative training data can be written as follows:

{\small
\begin{align}
 \label{eq:total-loss}
J = & \sum_{((h,t), (h',t')) \in \cD_{+}}  \norm{\vec{r}(\vec{h}, \vec{t}) - \vec{r}(\vec{h'}, \vec{t'})}_{2}^{2} \nonumber \\
 & - \sum_{((h,t), (h',t')) \in \cD_{-}}  \norm{\vec{r}(\vec{h}, \vec{t}) - \vec{r}(\vec{h'}, \vec{t'})}_{2}^{2}
\end{align}
}

Assuming that the training word-pairs are randomly sampled from $\cD_{+}$ and $\cD_{-}$ according to two distributions respectively
$p_{+}$ and $p_{-}$, we can compute the total expected loss, $\Ep_{p}[J]$, as follows:
\begin{align}
\label{eq:exp-loss}
 \Ep_{p}[J] =& \Ep_{p_{+}}\left[\norm{\vec{r}(\vec{h}, \vec{t}) - \vec{r}(\vec{h'}, \vec{t'})}_{2}^{2} \right] - \nonumber \\
 &\Ep_{p_{-}}\left[\norm{\vec{r}(\vec{h}, \vec{t}) - \vec{r}(\vec{h'}, \vec{t'})}_{2}^{2} \right] 
\end{align}

We make the following assumptions to further analyse the properties of relational embeddings.
\begin{description}
\item[Uncorrelation:]
The correlation between any two distinct dimensions of a word embedding is zero. 
One might think that the uncorrelation of word embedding dimensions to be a strong assumption, but we later show its validity empirically in~\S\ref{sec:cross-corr} for a wide range of word embeddings.

\item[Standerdisation:]
Word embeddings are standerdised to zero mean and unit variance. This is a linear transformation in the word embedding space and does not
affect the topology of the embedding space. In particular, translating word embeddings such that they have a zero mean has shown to improve performance in similarity tasks~\cite{All-but-Top}.

\item[Relational Independence] Word pairs in the training data are assumed to be i.i.d. For example, whether a particular semantic relation $r$ exists between $h$ and $t$, is assumed to be independent of any other relation $r'$ that exists between $h'$ and $t'$ in a different pair.
\end{description}

For relation representations given by \eqref{eq:rel}, \autoref{th:pd}  holds:
\begin{theorem}
\label{th:pd}
Consider  the bilinear relational embedding defined by \eqref{eq:rel} computed using uncorrelated word embeddings.
If the word embeddings are standerdised, then the expected loss given by \eqref{eq:exp-loss} over a relationally independent set of word pairs is independent of $\underline{\mat{A}}$.
\end{theorem}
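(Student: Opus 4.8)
The plan is to expand the squared Euclidean distance in \eqref{eq:exp-loss} coordinate by coordinate, isolate every term that carries the tensor $\underline{\mat{A}}$, and then show that after taking expectations these terms collapse into a single distribution-independent constant that appears identically in the positive and the negative expectation, so that it cancels in their difference. Writing the $k$-th component of \eqref{eq:rel} as $r_k = \vec{h}\T\mat{A}^{(k)}\vec{t} + (\mat{P}\vec{h})_k + (\mat{Q}\vec{t})_k$, I would split it into a bilinear part $b_k = \vec{h}\T\mat{A}^{(k)}\vec{t} = \sum_{i,j} A_{ij}^{(k)} h_i t_j$ and a linear part $\ell_k = (\mat{P}\vec{h})_k + (\mat{Q}\vec{t})_k$, so that only $b_k$ depends on $\underline{\mat{A}}$. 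Everything then reduces to tracking where $b_k$ enters the expansion $\norm{\vec{r}-\vec{r}'}_2^2 = \norm{\vec{r}}_2^2 + \norm{\vec{r}'}_2^2 - 2\langle\vec{r},\vec{r}'\rangle$.

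First I would dispatch the inner-product term. By Relational Independence the two pairs $(h,t)$ and $(h',t')$ are independent, so $\Ep_{p}[\langle\vec{r},\vec{r}'\rangle] = \langle\Ep_{p}[\vec{r}],\Ep_{p}[\vec{r}']\rangle$, and it suffices to control $\Ep_{p}[r_k] = \Ep_{p}[b_k] + \Ep_{p}[\ell_k]$. Because the embeddings are standardised to zero mean and the head and tail factorise, $\Ep_{p}[b_k] = \sum_{i,j} A_{ij}^{(k)}\Ep[h_i]\Ep[t_j] = 0$, which strips $\underline{\mat{A}}$ out of the entire inner-product contribution, leaving it a function of the linear parts alone.

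For the remaining single-pair norms $\Ep_{p}[\norm{\vec{r}}_2^2] = \sum_k \Ep_{p}[(b_k+\ell_k)^2]$ I would evaluate the two pieces that still carry $\underline{\mat{A}}$. The mixed piece $\Ep_{p}[b_k\ell_k]$ is a sum of third-order moments $\Ep[h_i t_j h_m]$ and $\Ep[h_i t_j t_m]$, each of which factors into a product containing a single zero-mean factor and therefore vanishes. The quadratic piece uses Uncorrelation read as $\Ep[h_i h_m] = \delta_{im}$ together with unit variance: $\Ep[b_k^2] = \sum_{i,j,m,n} A_{ij}^{(k)} A_{mn}^{(k)}\Ep[h_i h_m]\Ep[t_j t_n] = \sum_{i,j}(A_{ij}^{(k)})^2 = \norm{\mat{A}^{(k)}}_{\mathrm{F}}^2$. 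Summing over $k$, the whole $\underline{\mat{A}}$-dependence of $\Ep_{p}[\norm{\vec{r}}_2^2]$ reduces to $\norm{\underline{\mat{A}}}_{\mathrm{F}}^2$, and identically for $\Ep_{p}[\norm{\vec{r}'}_2^2]$; crucially this constant is determined by the pinned second moments only, hence is the same under $p_{+}$ and $p_{-}$.

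Collecting the pieces, $\Ep_{p}[\norm{\vec{r}-\vec{r}'}_2^2]$ equals $2\norm{\underline{\mat{A}}}_{\mathrm{F}}^2$ plus terms built solely from $\ell_k$ and $\ell_k'$; since the $2\norm{\underline{\mat{A}}}_{\mathrm{F}}^2$ summand is common to $p_{+}$ and $p_{-}$ it cancels when forming the difference in \eqref{eq:exp-loss}, leaving $\Ep_{p}[J]$ a function of $\mat{P}$ and $\mat{Q}$ only. The hard part will be the cross-pair interactions hidden in $\Ep_{p}[\langle\vec{r},\vec{r}'\rangle]$: I must argue that the only surviving dependence on $\underline{\mat{A}}$ there is through $\Ep_{p}[b_k]$, which Standardisation forces to zero, and that no residual second-order coupling $\Ep_{p}[b_k b_k']$ between the bilinear parts of the two pairs can persist. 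This is exactly where Relational Independence is indispensable, letting the product expectation factor through the (zero) means, and where one must be careful that it is invoked for both $p_{+}$ and $p_{-}$ so that the tensor contributes the identical constant to each. Keeping straight which of the three assumptions is responsible for each vanishing step is the place where care is required.
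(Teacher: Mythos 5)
Your route differs structurally from the paper's. You aim to show that the $\underline{\mat{A}}$-dependence of each of $\Ep_{p_{+}}$ and $\Ep_{p_{-}}$ reduces to a common constant $2\norm{\underline{\mat{A}}}_{\mathrm{F}}^{2}$ that cancels when the difference \eqref{eq:exp-loss} is formed. The paper never needs any cross-distribution cancellation: it shows that every term involving $\underline{\mat{A}}$ --- the quadratic-in-$\mat{A}^{(k)}$ terms \eqref{eq:four}--\eqref{eq:same} and the cross terms $A^{(k)}_{ij}P_{kn}$, $A^{(k)}_{ij}Q_{kn}$ in \eqref{eq:A-p-exp} --- vanishes \emph{identically within each expectation separately}, so the positive and negative losses are each independent of $\underline{\mat{A}}$ on their own. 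That structural difference would be acceptable if your intermediate steps were licensed by the hypotheses, but they are not.

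The gap is your repeated factorisation of expectations \emph{within} a single pair: $\Ep_{p}[h_{i}t_{j}]=\Ep[h_{i}]\Ep[t_{j}]$ (to get $\Ep_{p}[b_{k}]=0$), $\Ep_{p}[h_{i}h_{m}t_{j}t_{n}]=\Ep[h_{i}h_{m}]\Ep[t_{j}t_{n}]$ (to get $\Ep_{p}[b_{k}^{2}]=\norm{\mat{A}^{(k)}}_{\mathrm{F}}^{2}$), and the analogous splitting of the third moments in $\Ep_{p}[b_{k}\ell_{k}]$. Relational Independence only makes \emph{distinct} pairs $(h,t)$ and $(h',t')$ independent; it says nothing about the joint distribution of the head and tail of the \emph{same} pair, and that within-pair dependence is exactly what encodes the relation, so assuming it away is not an innocuous technicality. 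Concretely, after the off-diagonal terms are killed by uncorrelation, $\Ep_{p}[b_{k}]=\sum_{i}A^{(k)}_{ii}\Ep_{p}[h_{i}t_{i}]$, and the same-dimension within-pair moments $\Ep_{p}[h_{i}t_{i}]$, $\Ep_{p}[h_{i}^{2}t_{i}^{2}]$ are constrained neither by uncorrelation (which concerns distinct dimensions) nor by zero-mean/unit-variance of each word separately; they are properties of $p$ and can differ between $p_{+}$ and $p_{-}$ (analogous pairs are precisely those whose head--tail coupling is special). In that case your ``common constant'' is not common and the cancellation in \eqref{eq:exp-loss} fails. The paper closes exactly this hole the other way: it invokes relational independence only across pairs, $\Ep_{p_{+}}[h_{i}t_{i}h'_{i}t'_{i}]=\Ep_{p_{+}}[h_{i}t_{i}]\Ep_{p_{+}}[h'_{i}t'_{i}]$, and then pins the within-pair moments by its standardisation convention, $\Ep_{p_{+}}[h_{i}t_{i}]=\Ep_{p_{+}}[h_{i}^{2}t_{i}^{2}]=1$, so that \eqref{eq:same} evaluates to $1-2+1=0$ and the whole $\underline{\mat{A}}$-block disappears term by term (one may debate that convention, but it is the paper's stated assumption, whereas within-pair independence is yours alone). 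To repair your argument you must either adopt those moment conventions --- after which the $\underline{\mat{A}}$-terms vanish outright and your cancellation step becomes unnecessary --- or add the explicit extra hypothesis that the within-pair moments are identical under $p_{+}$ and $p_{-}$, which is a substantive assumption beyond the three stated in the theorem.
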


\begin{proof}

Let us consider the bilinear term in \eqref{eq:rel}, because $i$ and $j (\neq i)$ dimensions of word embeddings are uncorrelated by the assumption (i.e. $\textrm{corr}(u_{i}, u_{j}) = 0$), from the definition of correlation we have,
\begin{align}
 \textrm{corr}(u_{i}, u_{j}) &= \Ep[u_{i}u_{j}] - \Ep[u_{i}]\Ep[u_{j}] = 0 \\ 
 \label{eq:prod}
 \Ep[u_{i}u_{j}] &= \Ep[u_{i}]\Ep[u_{j}] .
\end{align}
Moreover, from the standerdisation assumption we have, $\Ep[u_{i}] = 0, \ \  \forall_{i = 1 \ldots n}$. From \eqref{eq:prod} it follows that:
\begin{align}
\label{eq:zero}
\Ep[u_{i}u_{j}] = 0
\end{align}
for $i \neq j$ dimensions.

We will next show that \eqref{eq:exp-loss} is independent of $\underline{\mat{A}}$.
For this purpose, let us consider the $\Ep_{p_{+}}$ term first and 
write the $k$-th dimension of $\vec{r}(\vec{h}, \vec{t})$ using $\mat{A}^{(k)}$, $\mat{P}$ and $\mat{Q}$ as follows:
\begin{align}
\label{eq:k}
 \sum_{i,j} \left( A^{(k)}_{ij}h_{i}t_{j}\right) + \sum_{n}P_{kn}h_{n} + \sum_{n}Q_{kn}t_{n}
\end{align}
Plugging \eqref{eq:k} in \eqref{eq:exp-loss} and computing the loss over all positive training instances we get,

{\small
\begin{align}
\label{eq:sqd}
& \Ep_{p_{+}}[ \sum_{k} ( \sum_{i,j} \left( A^{(k)}_{ij}(h_{i}t_{j} - h'_{i}t'_{j})\right)+ \nonumber \\
&  \sum_{n}P_{kn}(h_{n} - h'_{n}) + \sum_{n}Q_{kn}(t_{n} - t'_{n}) )^{2} ]
\end{align}
}%

Terms that involve only elements in $\mat{A}^{(k)}$ take the form:
{\small
\begin{align}
  \sum_{i,j}&\sum_{l,m}\Ep_{p_{+}} \left[ A^{(k)}_{ij}A^{(k)}_{lm}(h_{i}t_{j} - h'_{i}t'_{j})(h_{l}t_{m} - h'_{l}t'_{m}) \right] \nonumber \\ 
 =& \sum_{i,j}\sum_{l,m} A^{(k)}_{ij}A^{(k)}_{lm} ( \Ep_{p_{+}}[h_{i}t_{j}h_{l}t_{m}] - \Ep_{p_{+}}[h_{i}t_{j}h'_{l}t'_{m}] -  \nonumber \\
   &\Ep_{p_{+}}[h'_{i}t'_{j}h_{l}t_{m}] + \Ep_{p_{+}}[h'_{i}t'_{j}h'_{l}t'_{m}] ) \label{eq:four}
\end{align}
}%

In cases where $i \neq j$ and $l \neq m$, each of the four expectations in \eqref{eq:four} contains the product of different dimensionalities, which is zero from \eqref{eq:zero}. 
For $i = j = l = m$ case we have,

{\small
\begin{align}
\label{eq:same}
 {A^{(k)}_{ij}}^{2}( \Ep_{p_{+}}[h_{i}^{2}t_{i}^{2}] - 2\Ep_{p_{+}}[h_{i} t_{i}h'_{i} t'_{i}]  + \Ep_{p_{+}}[{h'_{i}}^{2} {t'_{i}}^{2}] )
 \end{align}
 }%
From the relational independence we have $\Ep_{p_{+}}[h_{i}t_{i}h'_{i}t'_{i}] = \Ep_{p_{+}}[h_{i}t_{i}]\Ep_{p_{+}}[h'_{i}t'_{i}]$.
Moreover, because the word embeddings are assumed to be standerdised to unit variance we have 
$\Ep_{p_{+}}[h_{i}t_{i}] = \Ep_{p_{+}}[h'_{i}t'_{i}] = 1$ and $\Ep_{p_{+}}[h^{2}_{i}t^{2}_{i}] = \Ep_{p_{+}}[{h'}^{2}_{i} {t'}^{2}_{i}] = 1$.
Therefore, \eqref{eq:same} evaluates to zero and none of the terms arising purely from $\underline{\mat{A}}$ will remain 
in the expected loss over positive examples.

Next, lets consider the $A^{(k)}_{ij}P_{kn}$ terms in the expansion of \eqref{eq:sqd} given by,
\begin{align}
\label{eq:A-p}
2\sum_{i,j}\sum_{n}A^{(k)}_{ij} P_{kn} (h_{i}t_{j} - h'_{i}t'_{j})(h_{n} - h'_{n}) .
\end{align}
Taking the expectation of \eqref{eq:A-p} w.r.t. $p_{+}$ we get,
{\small
\begin{align}
\label{eq:A-p-exp}
&  2\sum_{i,j}\sum_{n} A^{(k)}_{ij} P_{kn}  ( \Ep_{p_{+}}[h_{i}t_{j}h_{n}] - \Ep_{p_{+}}[h_{i}t_{j}h'_{n}] -  \nonumber \\ 
&  \Ep_{p_{+}}[h'_{i}t'_{j}h_{n}] + \Ep_{p_{+}}[h'_{i}t'_{j}h'_{n}]) .
\end{align}}
Likewise, from the uncorrelation assumption and relational independence it follows that all the expectations in \eqref{eq:A-p-exp} are zero.
A similar argument can be used to show that terms that involve $A^{(k)}_{ij}Q_{kn}$ disappear from \eqref{eq:sqd}.
Therefore, $\underline{\mat{A}}$ does not play any part in the expected loss over positive examples.
Similarly, we can show that $\underline{\mat{A}}$ is independent of the expected loss over negative examples.
Therefore, from \eqref{eq:exp-loss} we see that the expected loss over the entire training dataset is independent of $\underline{\mat{A}}$.

\end{proof}

\subsection{Regularised $\ell_{2}$ loss}
\label{sec:reg}

As a special case, if we attempt to minimise the expected loss under some regularisation on $\underline{\mat{A}}$ such as the Frobenius norm regularisation, then this can be achieved by sending $\underline{\mat{A}}$ to zero tensor because according to \autoref{th:pd} \eqref{eq:rel} is independent from $\underline{\mat{A}}$.

With $\underline{\mat{A}} = \underline{\mat{0}}$, the relation between $h$ and $t$ can be simplified to:
\begin{align}
\label{eq:simp}
 \vec{r}(\vec{h}, \vec{t}) = \mat{P}\vec{h} + \mat{Q}\vec{t} 
\end{align}

Then the expected loss over the positive instances is given by \eqref{eq:pposexp}.
{\small
\begin{align}
\label{eq:pposexp}
&\Ep_{p_{+}}[\norm{\mat{P}(\vec{h}-\vec{h'}) + \mat{Q}(\vec{t}-\vec{t'})}_{2}^{2}]=\nonumber \\
& \Ep_{p_{+}}[(\vec{h}-\vec{h'})\T\mat{P}\T\mat{P}(\vec{h}-\vec{h'})] + \Ep_{p_{+}}[(\vec{h}-\vec{h'})\T\mat{P}\T\mat{Q}(\vec{t}-\vec{t'})] + \nonumber \\
& \Ep_{p_{+}}[(\vec{t}-\vec{t'})\T\mat{Q}\T\mat{P}(\vec{h}-\vec{h'})] + \Ep_{p_{+}}[(\vec{t}-\vec{t'})\T\mat{Q}\T\mat{Q}(\vec{t}-\vec{t'})]
\end{align}
 }%
 The second expectation term in RHS of \eqref{eq:pposexp} can be computed as follows:
{\small
\begin{align}
& \Ep_{p_{+}}[(\vec{h}-\vec{h'})\T\mat{P}\T\mat{Q}(\vec{t}-\vec{t'})] \nonumber \\
& =\sum_{i,j} {(\mat{P}\T\mat{Q})}_{ij} \Ep_{p_{+}}[(h_{i} - h'_{i})(t_{j} - t'_{j})] \nonumber \\
& = \sum_{i,j}{(\mat{P}\T\mat{Q})}_{ij} \left( \Ep_{p_{+}}[h_{i}t_{j}] - \Ep_{p_{+}}[h_{i}t'_{j}] - \Ep_{p_{+}}[h'_{i}t_{j}] + \Ep_{p_{+}}[h'_{i}t'_{j}] \right)
\label{eq:pposexp2}
 \end{align}
}%
When $i \neq j$, each of the four expectations in the RHS of \eqref{eq:pposexp2} are zero from the uncorrelation assumption. When $i = j$, each term will be equal to one from the standeridisation assumption (unit variance) and cancel each other out.
A similar argument can be used to show that the third expectation term in the RHS of \eqref{eq:pposexp} vanishes.

Now lets consider the first expectation term in the RHS of \eqref{eq:pposexp}, which can be computed as follows:

{\small
\begin{align}
 &\Ep_{p_{+}}[(\vec{h}-\vec{h'})\T\mat{P}\T\mat{P}(\vec{h}-\vec{h'})] \nonumber \\
 &=\sum_{i,j} {(\mat{P}\T\mat{P})}_{ij} \Ep_{p_{+}}[(h_{i} - h'_{i})(h_{j} - h'_{j})] \nonumber \\
& = \sum_{i,j}{(\mat{P}\T\mat{P})}_{ij} ( \Ep_{p_{+}}[h_{i}h_{j}] - \Ep_{p_{+}}[h_{i}h'_{j}] \nonumber \\ &
                                                          \qquad \qquad \qquad -\Ep_{p_{+}}[h'_{i}h_{j}] + \Ep_{p_{+}}[h'_{i}h'_{j}] ) 
\label{eq:pposexp1}
 \end{align}
}%
When $i \neq j$, it follows from the uncorrelation assumption that each of the four expectation terms in the RHS of 
\eqref{eq:pposexp1} will be zero. For $i=j$ case we have,

{\small
\begin{align}
 & \sum_{i,j}{(\mat{P}\T\mat{P})}_{ii} \left( \Ep_{p_{+}}[h_{i}^{2}] - 2\Ep_{p_{+}}[h_{i}h'_{i}] + \Ep_{p_{+}}[{h'_{i}}^{2}] \right) \nonumber \\
 &=2 \sum_{i,j}{(\mat{P}\T\mat{P})}_{ii} 
\label{eq:pposexp2}
 \end{align}
}%
Note that from the relational independence between $h$ and $h'$ we have
$\Ep_{p_{+}}[h_{i}h'_{i}] = \Ep_{p_{+}}[h_{i}]\Ep_{p_{+}}[h'_{i}]$. From the standerdidation (zero mean) assumption this term is zero. 
On the other hand $\Ep_{p_{+}}[h_{i}^{2}] = \Ep_{p_{+}}[{h'_{i}}^{2}] = 1$ from the standerdidation (unit variance) assumption, which gives the result in \eqref{eq:pposexp2}.

Similarly, the fourth expectation term in the RHS of \eqref{eq:pposexp}
 evaluates to $2\sum_{i,j} {(\mat{Q}\T\mat{Q})}_{ii}$,
which shows that \eqref{eq:pposexp} evaluates to 
$2\sum_{i,j} \left( {(\mat{P}\T\mat{P})}_{ii} + {(\mat{Q}\T\mat{Q})}_{ii}\right)$.
Note that this is independent of the positive instances and will be equal to the expected loss over negative instances,
which gives $\Ep_{p}[J] = 0$ for the relational embedding given by \eqref{eq:simp}.

It is interesting to note that \PairDiff is a special case of \eqref{eq:simp}, where $\mat{P} = \mat{I}$ and
$\mat{Q} = -\mat{I}$. In the general case where word embeddings are nonstanderdised to unit variance,
we can set $\mat{P}$ to be the diagonal matrix where $\mat{P}_{ii} = 1/\sigma_{i}$, where $\sigma_{i}$ is the variance of the $i$-th dimension of the word embedding space, to enforce standerdisation.
Considering that $\mat{P}, \mat{Q}$ are parameters of the relational embedding, this is analogous to \emph{batch normalisation}~\cite{batch-norm}, where the appropriate parameters for the normalisation are learnt during training.


\setcounter{secnumdepth}{2}

\section{Experimental Results}
\label{sec:exp}

\subsection{Corss-dimensional Correlations}
\label{sec:cross-corr}

A key assumption in our theoretical analysis is the uncorrelations between different dimensions in word embeddings.
Here, we empirically verify the uncorrelation assumption  for different input word embeddings. 
For this purpose, we create SG, CBOW and GloVe embeddings from the ukWaC corpus\footnote{\url{http://wacky.sslmit.unibo.it/doku.php?id=corpora}}.
We use a context window of 5 tokens and select words that occur at least 6 times in the corpus.
We use the publicly available implementations for those methods by the original authors and set the parameters to the recommended values in \citep{Levy:TACL:2015} to create $50$-dimensional word embeddings.
As a representative of counting-based word embeddings, we create a word co-occurrence matrix weighted by the positive pointwise mutual information (PPMI) and apply singular value decomposition (SVD) to obtain $50$-dimensional embeddings, which we refer to as the Latent Semantic Analysis (LSA) embeddings. 

We use Latent Dirichlet Allocation (LDA)~\citep{blei2003latent} to create a topic model, and represent each word by its distribution over the set of topics. Ideally, each topic will capture some semantic category and the topic distribution provides a semantic representation for a word. 
We use gensim\footnote{\url{https://radimrehurek.com/gensim/wiki.html}} to extract $50$ topics from a 2017 January dump of English Wikipedia. 
In contrast to the above-mentioned word embeddings, which are dense and flat structured, we used Hierarchical Sparse Coding\footnote{\url{http://www.cs.cmu.edu/~ark/dyogatam/wordvecs/}} (HSC)~\citep{Yogatama:ICML:2015} to produce sparse and hierarchical word embeddings.
\begin{figure}[t!]
  \centering
  \includegraphics[width=3.2in]{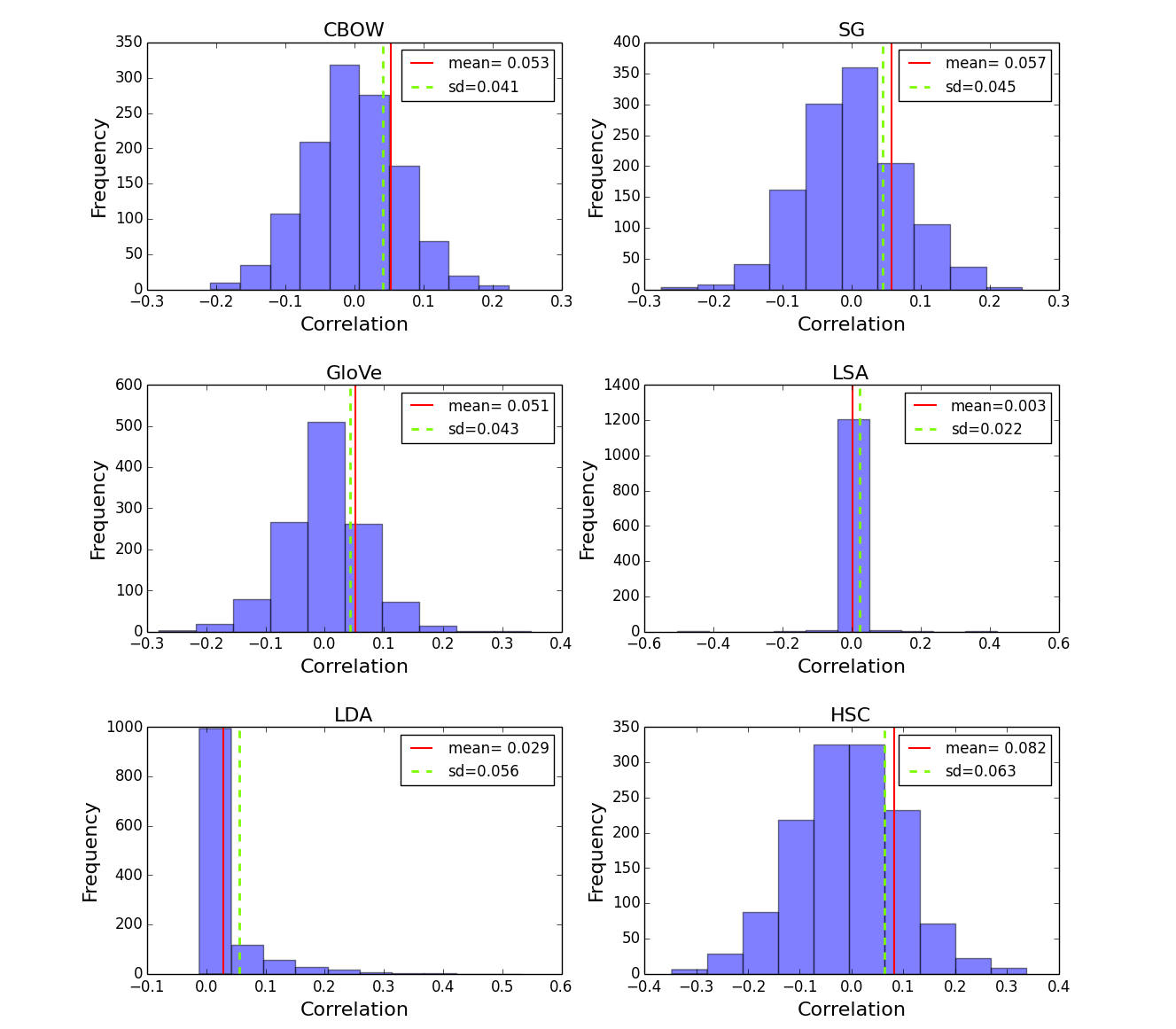}
  \caption{Cross-dimensional correlations for six word embeddings.}
  \label{fig:corr}
\end{figure}

Given a word embedding matrix $\mat{W} \in \R^{m \times d}$, where each row correspond to the $d$-dimensional embedding of a word in a vocabulary containing $m$ words, we compute a correlation matrix $\mat{C} \in \R^{d \times d}$, where the $(i,j)$ element, $C_{ij}$, denotes the Pearson correlation coefficient between the $i$-th and $j$-th dimensions in the word embeddings over the $m$ words.
By construction $C_{ii} = 1$ and the histograms of the cross-dimensional correlations ($i \neq j$) are shown in \autoref{fig:corr} for $50$ dimensional word embeddings obtained from the six methods described above. The mean of the absolute pairwise correlations for each embedding type and the standard deviation (sd) are indicated in the figure. 

From \autoref{fig:corr}, irrespective of the word embedding learning method used, we see that cross-dimensional correlations are distributed in a narrow range with an almost zero mean. 
This result empirically validates the uncorrelation assumption we used in our theoretical analysis.
Moreover, this result indicates that \autoref{th:pd} can be applied to a wide-range of existing word embeddings.

\subsection{Learning Relation Representations}
\label{sec:learn}

Our theoretical analysis in \S\ref{sec:reps} claims that the performance of the bilinear relational embedding is independent of the tensor operator $\underline{\mat{A}}$.
To empirically verify this claim, we conduct the following experiment.
For this purpose, we use the BATS dataset~\citep{gladkova2016analogy} that contains of 40 semantic and syntactic relation types\footnote{\url{http://vsm.blackbird.pw/bats}}, and generate positive examples by pairing word-pairs that have the same relation types. 
Approximately each relation type has 1,225 word-pairs, which enables us to generate a total of 48k positive training instances (analogous word-pairs) of the form $((h,t),(h',t'))$. 
For each pair $(h,t)$ related by a relation $r$, we randomly select pairs $(h',t')$ with a different relation type $r'$, according to the $\ell_{2}$ distance between the two pairs to create negative (nonanalogous) instances.\footnote{10 negative instances are generated from each word-pair in our experiments.}
We collectively refer both positive and negative training instances as the \emph{training} dataset.

Using the $d = 50$ dimensional word embeddings from CBOW, SG, GloVe, LSA, LDA, and HSC methods created in \S\ref{sec:cross-corr},
we learn relational embeddings according to \eqref{eq:rel} by minimising the $\ell_{2}$ loss, \eqref{eq:total-loss}.
To avoid overfitting, we perform $\ell_{2}$ regularisation on $\underline{\mat{A}}$, $\mat{P}$ and $\mat{Q}$ are regularised to diagonal matrices $p\mat{I}$ and $q\mat{I}$, for $p,q \in \R$.
We initialise all parameters by uniformly sampling from $[-1,+1]$ and use AdaGrad~\citep{duchi2011adaptive} with initial learning rate set to 0.01. 

\begin{figure}[t]
\centering

    \begin{subfigure}{0.23\textwidth}
    \includegraphics[width=\textwidth]{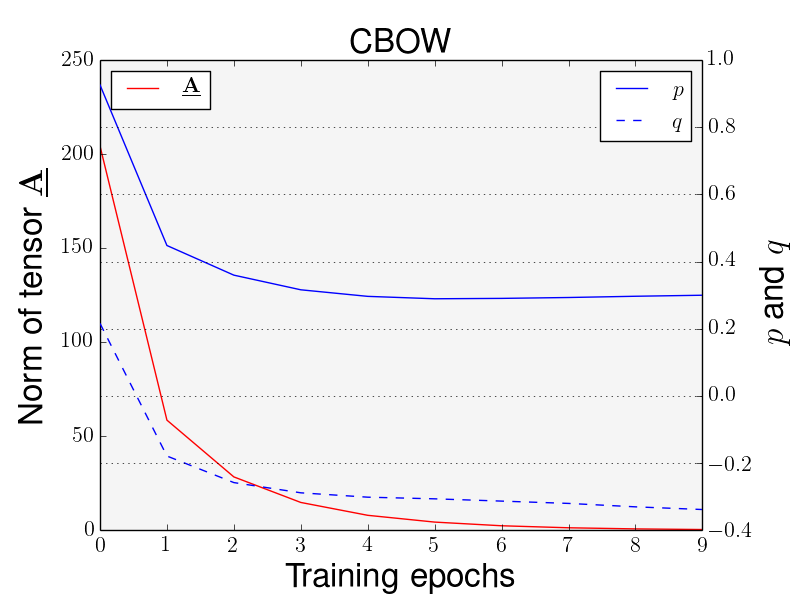}
    \end{subfigure}
    \begin{subfigure}{0.23\textwidth}
    \includegraphics[width=\textwidth]{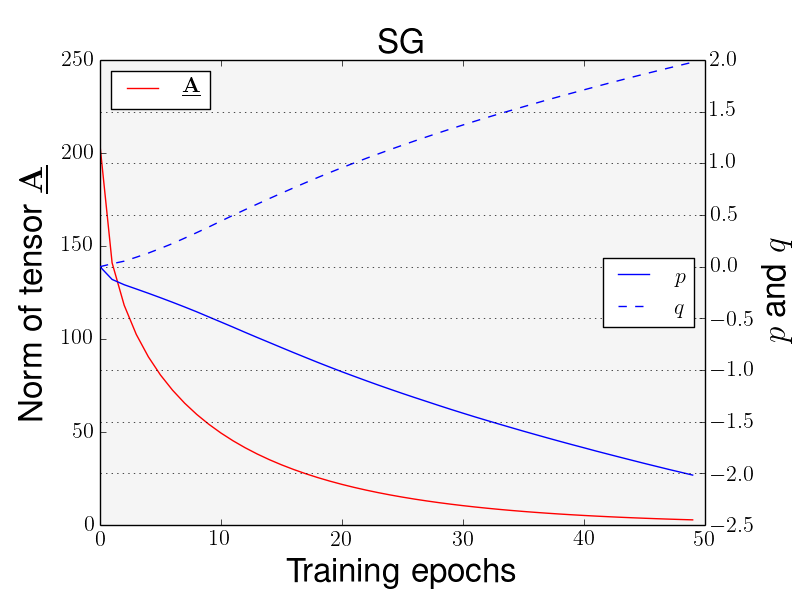}
    \end{subfigure}
    \begin{subfigure}{0.23\textwidth}
    \includegraphics[width=\textwidth]{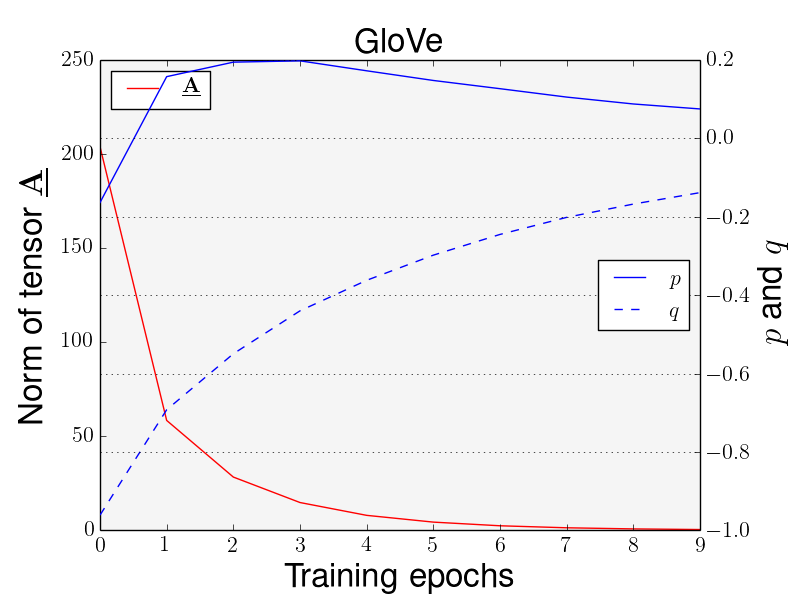}
   \end{subfigure}
   \begin{subfigure}{0.23\textwidth}
    \includegraphics[width=\textwidth]{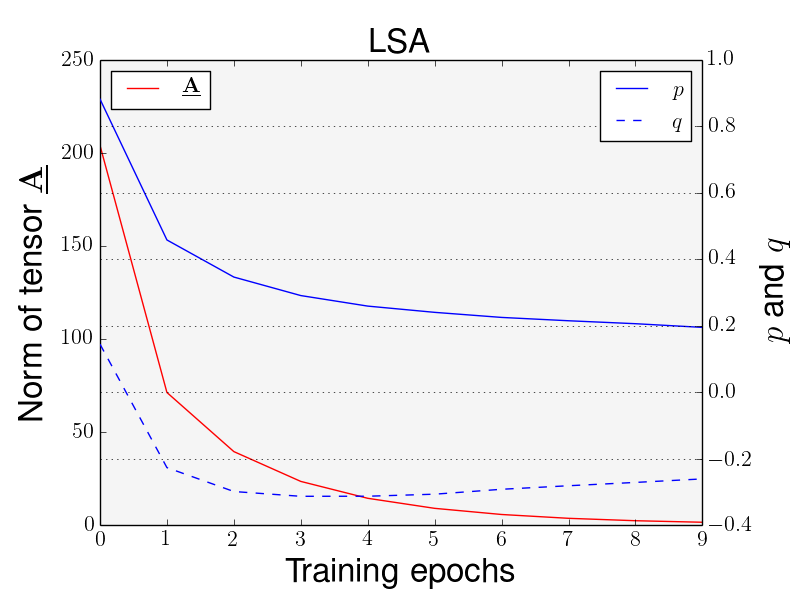}
   \end{subfigure}
   \begin{subfigure}{0.23\textwidth}
   \includegraphics[width=\textwidth]{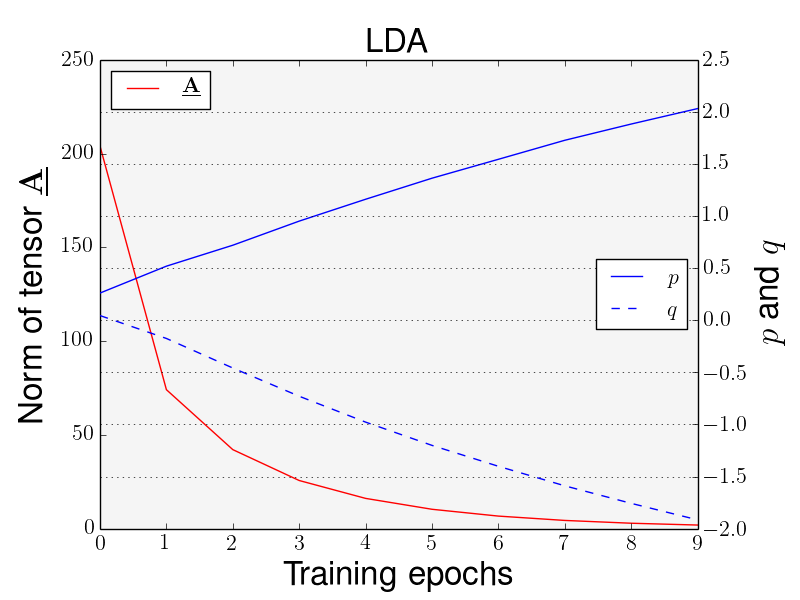}
   \end{subfigure}
   \begin{subfigure}{0.23\textwidth}
   \includegraphics[width=\textwidth]{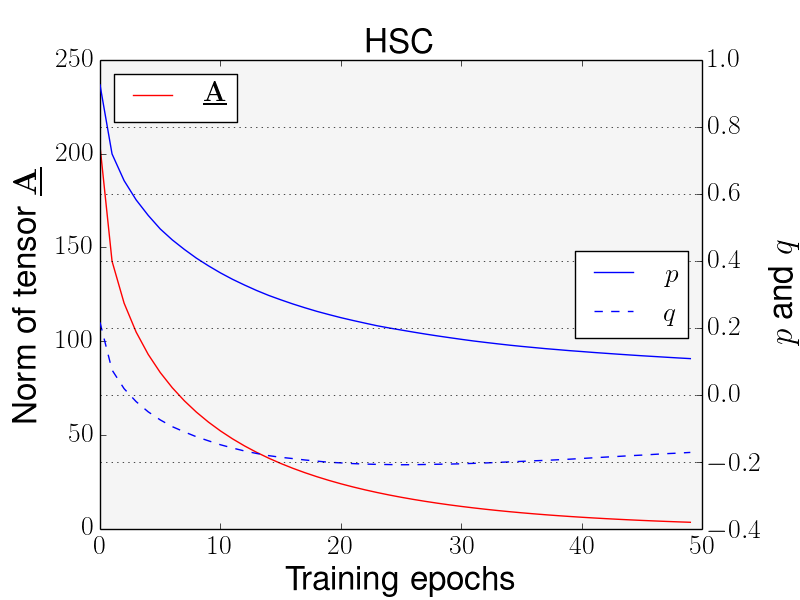}
   \end{subfigure}
\caption{The learnt model parameters for different word embeddings of 50 dimensions.}
\label{fig:Apq}
\end{figure}

\autoref{fig:Apq} shows the Frobenius norm of the tensor $\underline{\mat{A}}$ (on the left vertical axis) and the values of $p$ and $q$ (on the right vertical axis) for the six word embeddings. In all cases, we see that as the training progresses, $\underline{\mat{A}}$ goes to zero as predicted by \autoref{th:pd} under regularisation. 
Moreover, we see that approximately $p \approx -q = c$ is reached for some $c \in \R$ in all cases, which implies that $\mat{P}\approx -\mat{Q} = c\mat{I}$, which is the \PairDiff operator. 
Among the six input word embeddings compared in \autoref{fig:corr}, HSC has the highest mean correlation ($0.082$), which implies that its dimensions are correlated more than in the other word embeddings. 
This is to be expected by design because a hierarchical structure is imposed on the dimensions of the word embedding during training. 
However, HSC embeddings also satisfy the $\underline{\mat{A}} \approx \underline{\mat{0}}$ and $p \approx -q = c$ requirements, as expected by the \PairDiff.
This result shows that the claim of \autoref{th:pd} is empirically true even when the uncorrelation assumption is mildly violated.

\begin{figure}[t]
  \includegraphics[width=75mm]{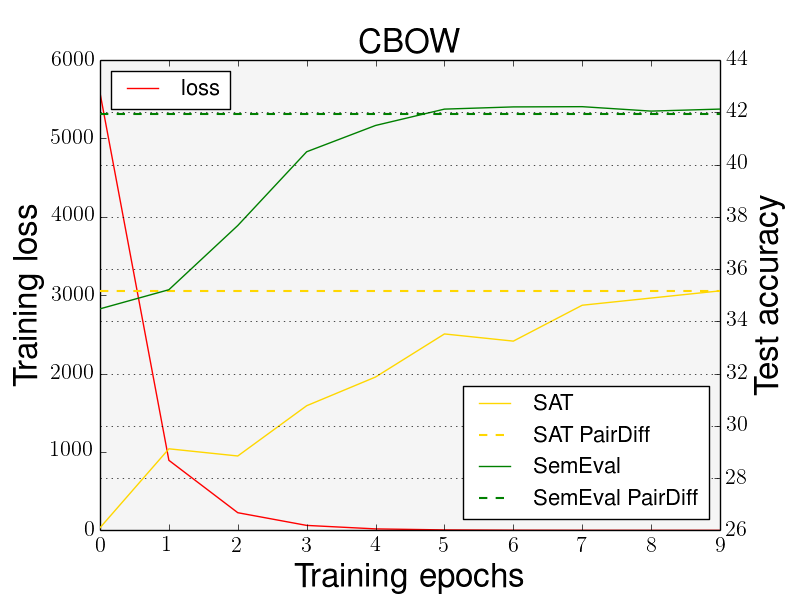}
  \caption{The training loss and test performance on SAT and SemEval benchmarks for relational embeddings. When the loss converges to zero, the performance on both benchmarks reaches that of the \PairDiff.}
  \label{fig:pd}
\end{figure}


\subsection{Generalisation Performance on Analogy Detection}
\label{sec:perform}

So far we have seen that the bilinear relational representation given by \eqref{eq:rel} does indeed converge to the form predicted by our theoretical analysis for different types of word embeddings. 
However, it remains unclear whether the parameters learnt from the training instances generated from the BATS dataset
 accurately generalise to other benchmark datasets for analogy detection.
To emphasise, our focus here is not to outperform relational representation methods proposed in previous works, but rather to empirically show that the learnt  operator converges to the popular \PairDiff for the analogy detection task.

To measure the generalisation capability of the learnt relational embeddings from BATS, 
we measure their performance on two other benchmark datasets: 
SAT~\cite{turney2003combining} and SemEval 2012-Task2\footnote{\url{https://sites.google.com/site/semeval2012task2/}}. 
Note that we \emph{do not} retrain $\underline{\mat{A}}$, $\mat{P}$ and $\mat{Q}$ in \eqref{eq:rel} on SAT nor SemEval, but simply to use
their values learnt from BATS because the purpose here to evaluate the generalisation of the learnt operator.

In SAT analogical questions, given a stem word-pair $(a,b)$ with five candidate word-pairs $(c,d)$, the task is to select the word-pair that is relationally  similar to the the stem word-pair. 
The relational similarity between two word-pairs $(a,b)$ and $(c,d)$ is computed by the cosine similarity between the corresponding relational embeddings $\vec{r}(\vec{a},\vec{b})$ and $\vec{r}(\vec{c},\vec{d})$.
The candidate word-pair that has the highest relational similarity with the stem word-pair is selected as the correct answer to a word analogy question. The reported accuracy is the ratio of the correctly answered questions to the total number of questions. 
On the other hand, SemEval dataset has 79 semantic relations, with each relation having ca. 41 word-pairs and four prototypical examples. 
The task is to assign a score for each word pair which is the average of the relational similarity between the given word-pair and prototypical word-pairs in a relation. Maximum difference scaling (MaxDiff) is used as the evaluation measure in this task.

\autoref{fig:pd} shows the performance of the relational embeddings composed from 50-dimensional CBOW embeddings.\footnote{Similar trends were observed for all six word embedding types but not shown here due to space limitations.}
The level of performance reported by \PairDiff on SAT and SemEval datasets are respectively $35.16\%$ and $41.94\%$, and are shown by horizontal dashed lines.
From \autoref{fig:pd}, we see that the training loss gradually decreases with the number of training epochs and the performance of the relational embeddings on SAT and SemEval datasets reach that of the \PairDiff operator.
This result indicates that the relational embeddings learnt not only converge to \PairDiff operator on training data but also generalise to unseen relation types in SAT and SemEval test datasets.

\section{Conclusion}

We showed that, if the word embeddings are standardised and uncorrelated, then the expected $\ell_{2}$ distance between analogous and non-analogous word-pairs is independent of bilinear terms, and the relation embedding further simplifies to the popular \PairDiff operator under regularised settings. Moreover, we provided empirical evidence showing the uncorrelation in word embedding dimensions, where their cross-dimensional correlations are narrowly distributed around a mean close to zero. 
An interesting future research direction of this work is to extend the theoretical analysis to nonlinear relation composition operators, such as for nonlinear neural networks.

\bibliography{Infer}
\bibliographystyle{acl_natbib}

\end{document}